\documentclass[11pt,letterpaper,reqno]{amsart}

\usepackage[T1]{fontenc}
\usepackage{lmodern}
\usepackage{microtype}
\usepackage[margin=1.08in]{geometry}
\usepackage{amsmath,amssymb,mathtools}
\usepackage{booktabs,longtable,array}
\usepackage{enumitem}
\usepackage{xcolor}
\usepackage{hyperref}
\usepackage{xurl}

\hypersetup{
  colorlinks=true,
  linkcolor=blue!45!black,
  citecolor=blue!45!black,
  urlcolor=blue!55!black,
  pdfauthor={Kun LI; Li TIE; Peng WANG; Zihan LIU},
  pdftitle={Powers of pairwise differences: a proof of Colombo's 1928 determinant conjecture},
  pdfsubject={Determinants, apolarity, real Waring rank, distance-power matrices},
  pdfkeywords={pairwise-difference matrix, apolarity, binary form, real Waring rank}
}

\setlist[itemize]{leftmargin=2em,itemsep=0.25em,topsep=0.4em}
\setlist[enumerate]{leftmargin=2.2em,itemsep=0.25em,topsep=0.4em}

\allowdisplaybreaks
\numberwithin{equation}{section}

\newtheorem{theorem}{Theorem}[section]
\newtheorem{proposition}[theorem]{Proposition}
\newtheorem{lemma}[theorem]{Lemma}
\newtheorem{corollary}[theorem]{Corollary}
\newtheorem{remark}[theorem]{Remark}
\theoremstyle{definition}

\newtheorem{example}[theorem]{Example}

\newcommand{\RR}{\mathbb{R}}
\newcommand{\cP}{\mathcal{P}}
\newcommand{\rank}{\operatorname{rank}}
\newcommand{\Pf}{\operatorname{Pf}}

\newcommand{\LR}{L_{\RR}}
\newcommand{\defeq}{\mathrel{\vcentcolon=}}
\newcommand{\Mat}[1]{\mathbf{#1}}
\newcommand{\vct}[1]{\boldsymbol{#1}}

\title[Colombo's difference-power determinant]{An algebraic proof of Colombo's difference-power determinant conjecture}

\author{Kun LI\textsuperscript{*}}
\author{Li TIE}
\author{Peng WANG}
\author{Zihan LIU}

\date{26 August 2026}
\thanks{* Corresponding author.}
\thanks{E-mail addresses: \texttt{galoispure@163.com} (K. LI),
  \texttt{ttieli@icloud.com} (L. TIE),
  \texttt{wptest@qq.com} (P. WANG),
  \texttt{chelsealzh@163.com} (Z. LIU)}

\subjclass[2020]{Primary 15A15; Secondary 15A03, 15A18, 15B57, 13A50}
\keywords{pairwise-difference matrix, apolarity, binary form, real Waring rank}

\begin{document}

\begin{abstract}
Let \(n\ge2\) be even, let
\(\vct{\lambda}=(\lambda_1,\ldots,\lambda_n)\in\RR^n\) have pairwise
distinct coordinates, and define the difference-power matrix
\[
 \Mat{A}_d(\vct{\lambda})
 :=
 \bigl[(\lambda_r-\lambda_s)^d\bigr]_{r,s=1}^n,
 \qquad d\in\mathbb N.
\]
In 1928, Colombo proved that
\(\det\Mat{A}_{n-1}(\vct{\lambda})\ne0\)---and hence
\(\det\Mat{A}_{n-1}(\vct{\lambda})>0\)---and that
\(\rank\Mat{A}_d(\vct{\lambda})=d+1\) for \(0\le d<n-1\).
He conjectured that
\[
 \det\Mat{A}_d(\vct{\lambda})\ne0
 \qquad\text{for every } d\ge n-1.
\]
For even \(d\), the conjectured nonsingularity follows from previously
published results on distance-power matrices.  The remaining open cases
were therefore the supercritical odd exponents \(d\ge n+1\).  We prove
nonsingularity for all these odd exponents, thereby completing Colombo's
conjecture.  Consequently,
\[
 \rank\Mat{A}_d(\vct{\lambda})=\min\{n,d+1\}
 \qquad(d\in\mathbb N).
\]
Our proof converts a hypothetical kernel vector into a real binary form
having more projective real linear factors, counted with multiplicity,
than its real Waring length permits.
\end{abstract}

\maketitle
\tableofcontents

\section{Origin, historical progress, and main results}
\label{sec:introduction}

\subsection{Origin and formulation of Colombo's conjecture}

The conjecture considered here originates in Colombo's 1928 study of a
generalized Goursat problem \cite[pp.~34--36]{Colombo1928}.  Maclaurin
expansion leads, at each degree \(d\in\mathbb N\), to the coefficient
system
\begin{equation}\label{eq:colombo-system}
 \sum_{s=1}^n a_{sd}(\lambda_r-\lambda_s)^d=b_{rd},
 \qquad 1\le r\le n.
\end{equation}
Here \(\lambda_s\) is the parameter of the \(s\)-th characteristic line,
\(a_{sd}\) is an unknown Maclaurin coefficient, and \(b_{rd}\) is the
corresponding prescribed coefficient.

Abstracting the coefficient matrix, let \(n\ge2\), let
\(d\in\mathbb N\), and let
\(\vct{\lambda}=(\lambda_1,\ldots,\lambda_n)\in\RR^n\) have pairwise
distinct coordinates.  Define
\begin{equation}\label{eq:Ad-definition}
 \Mat{A}_d(\vct{\lambda})
 =\bigl[(\lambda_r-\lambda_s)^d\bigr]_{r,s=1}^n.
\end{equation}
For \(d=0\) we follow Colombo's convention \(0^0=1\), so that
\(\Mat{A}_0\) is the all-ones matrix.  For fixed \(d\),
\(\Mat{A}_d(\vct{\lambda})\) is precisely the coefficient matrix of
\eqref{eq:colombo-system}.

Colombo \cite[pp.~35--36, properties I--VIII]{Colombo1928} listed the
following properties, paraphrased in modern notation.
\begin{enumerate}[label=\textup{\Roman*.}]
\item The determinant is a symmetric homogeneous polynomial of total
degree \(nd\), of degree at most \(2d\) in each node.
\item The coefficient matrix is symmetric for even \(d\) and
skew-symmetric for odd \(d\):
\(\Mat{A}_d^{\mathsf T}=(-1)^d\Mat{A}_d\).
\item For odd \(d\), odd order forces the determinant to vanish, whereas
for even order the determinant is the square of a polynomial.
\item Within either parity class of exponents, unless the determinant
vanishes identically throughout that class, it vanishes for only finitely
many exponents.
\item The square of the Vandermonde product
\(\prod_{r<s}(\lambda_s-\lambda_r)^2\) divides the determinant.
\item At the threshold \(d=n-1\), the determinant is always nonzero.
\item For \(d<n-1\), the determinant vanishes identically and its matrix
rank is exactly \(d+1\).
\item The final item concerns an asymptotic limit in the exponent; it is
not used in the finite-\(d\) proof.
\end{enumerate}
Colombo \cite[p.~36]{Colombo1928} further reported verification for
\(n=4\) and \(n=6\), while his methods had not settled all even orders.
His remaining conjecture is that, for every even \(n\), pairwise distinct
real nodes, and integer \(d\ge n-1\), one has
\(\det\Mat{A}_d(\vct{\lambda})\ne0\).

\subsection{Existing even-exponent methods and the present result}

For positive even \(d\),
\[
 (\lambda_r-\lambda_s)^d=|\lambda_r-\lambda_s|^d,
\]
so \(\Mat{A}_d(\vct{\lambda})\) is a Hadamard power of a
one-dimensional Euclidean distance matrix.  Dyn--Goodman--Micchelli
\cite{DynGoodmanMicchelli1986} analyzed such positive powers, and Auer
\cite{Auer1998} classified invertible Hadamard powers of distance
matrices.  Their results give nonsingularity and the relevant inertia in
the even branch.  Bhatia--Jain
\cite[Theorem~1(iv)]{BhatiaJain2024} and Kapil--Mandeep--Singh
\cite{KapilMandeepSingh2024} later gave an equivalent viewpoint through
Kwong-matrix inertia and congruence with distance powers.  These methods
do not address the skew-symmetric higher odd-exponent branch.

A recent unreviewed entry in the ICM Conjectures catalogue records that
the citable literature located there left precisely the cases with even \(n\ge8\) and odd \(d\ge n+1\) unresolved
\cite{ICMConjecturesEntry}.  This identifies the remaining branch
addressed in the present paper.

For the higher odd exponents, a hypothetical kernel vector becomes a
real binary form.  The matrix equations force prescribed real linear
factors, whereas the Sylvester--Reznick bound limits their number by the
length of the same power-sum representation.  The two counts are
incompatible.  Combining this argument with Colombo's threshold result
and the published even-exponent classification covers the full
conjectural range.

\begin{theorem}[Colombo's determinant conjecture]\label{thm:main}
Let \(n\ge2\) be even, let
\(\vct{\lambda}=(\lambda_1,\ldots,\lambda_n)\in\RR^n\) have pairwise
distinct coordinates, and let \(d\ge n-1\) be an integer.  Then
\[
 \det \Mat{A}_d(\vct{\lambda})\ne0.
\]
\end{theorem}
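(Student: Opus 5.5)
The plan is to split the range \(d\ge n-1\) into three pieces. The threshold case \(d=n-1\) is Colombo's property VI. For even \(d\ge n\) we have \((\lambda_r-\lambda_s)^d=|\lambda_r-\lambda_s|^d\), so \(\Mat{A}_d(\vct{\lambda})\) is a positive even Hadamard power of a one-dimensional distance matrix with distinct points. Its nonsingularity can then be quoted from Dyn--Goodman--Micchelli \cite{DynGoodmanMicchelli1986} and Auer \cite{Auer1998}, or equivalently from \cite[Theorem~1(iv)]{BhatiaJain2024}. This leaves the odd exponents \(d\ge n+1\) with \(n\) even, and these carry all the real work.

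For odd \(d\ge n+1\), suppose \(\vct{c}=(c_1,\dots,c_n)\ne0\) satisfies \(\Mat{A}_d(\vct{\lambda})\vct{c}=0\). Discard the indices with \(c_s=0\), leaving a support \(S\) of size \(k\), where \(1\le k\le n\). Form the real binary form
\[
 F(x,y)=\sum_{s\in S} c_s\,(x-\lambda_s y)^d .
\]
First I would check that \(F\not\equiv0\). The \(d\)-th powers of \(k\le n\le d+1\) pairwise non-proportional linear forms are linearly independent (a Vandermonde argument on the coefficients), so \(F\ne0\). Next, the kernel equations say exactly that \(F(\lambda_r,1)=0\) for every \(r=1,\dots,n\). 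Hence \(F\) is divisible by the \(n\) distinct real linear factors \(x-\lambda_r y\).

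The parity step then produces one more real factor. Since \(F\) has real coefficients, its non-real linear factors occur in conjugate pairs. Thus the number of projective real linear factors of \(F\), counted with multiplicity, is congruent to \(d\) mod \(2\), and so is odd. But \(n\) is even, so \(F\) has at least \(n+1\ge k+1\) real linear factors counted with multiplicity. (If \(y\) divides \(F\), this is the real root at infinity, and it also counts.)

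The contradiction should come from the Sylvester--Reznick side. A nonzero real binary form that is a sum of \(k\) real \(d\)-th powers of pairwise distinct linear forms with \(k\le d\) should have at most \(k\) real linear factors counted with multiplicity. The one exception is the degenerate situation of a single pure power, which occurs when \(k=1\). I would prove this via apolarity, as a Descartes/Pólya--Szegő type rule of signs for sums of powers. The operator \(g=\prod_{s\in S}(\lambda_s\partial_x+\partial_y)\) of degree \(k\) annihilates \(F\). If \(F\) had too many real roots, then by Rolle's theorem, applied after dehomogenizing and taking suitable derivative combinations, the real-rootedness would pass to a form apolar to \(F\) of lower degree. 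This would contradict the length/apolarity characterization of Sylvester's theorem.

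The case \(k=1\) is handled directly: there \(F=c_s(x-\lambda_s y)^d\) vanishes only at \(\lambda_s\), yet it must vanish at all \(n\ge2\) nodes, which is impossible.

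I expect the main obstacle to be this counting lemma in the precise form needed, with multiplicities and the root at infinity included. The pure-power example shows that "real roots with multiplicity \(\le k\)" is false in general, so the correct statement must exploit the structure of our situation. Concretely, I would first try to prove the following: if \(F\) vanishes at \(n\) distinct points and \(k\le n\), then \(F\) has at most \(k\) real zeros with multiplicity. The proof would go by induction on \(k\), using the operator \(\lambda_s\partial_x+\partial_y\), which kills one summand and lowers the degree by one, together with Rolle's theorem on the dehomogenized polynomial to track how many real zeros survive. The delicate point is to keep track of zeros lost at infinity and of multiplicities under differentiation.
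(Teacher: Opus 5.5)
Your proposal is correct and follows the paper's proof step for step. It uses the same split into $d=n-1$ (Colombo's property VI), even $d$ (Dyn--Goodman--Micchelli and Auer), and odd $d\ge n+1$; in the odd case a kernel vector gives a nonzero form $F$ (nonzero by Vandermonde independence) divisible by the $n$ node factors, parity forces $\tau(F)\ge n+1$, and this contradicts the Sylvester--Reznick bound $\tau(F)\le \LR(F)\le n$.

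The one difference is that the paper simply cites Reznick's Theorem~3.2 for that bound. That theorem is exactly the general statement you first wrote: any sum of $k\ge2$ nonzero multiples of $d$-th powers of pairwise non-proportional real linear forms has $\tau\le k$. So no further structure of your situation is needed, and the pure-power exception really is confined to $k=1$, which you already handle. If you do reprove it by your Rolle induction, induct on that general statement, with the base case $k=2$ checked directly. Do not induct on the version assuming vanishing at the $n$ nodes, because $(\lambda_s\partial_x+\partial_y)F$ does not inherit that hypothesis.
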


\begin{corollary}[Complete rank formula]\label{cor:rank-formula}
Under the hypotheses on \(n\) and \(\vct{\lambda}\) above, for every
integer \(d\ge0\),
\[
 \rank \Mat{A}_d(\vct{\lambda})=\min\{n,d+1\}.
\]
\end{corollary}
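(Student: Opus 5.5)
The corollary splits at the threshold \(d=n-1\). For \(d\ge n-1\) we have \(\min\{n,d+1\}=n\), and Theorem~\ref{thm:main} gives \(\det\Mat{A}_d(\vct{\lambda})\ne0\), so \(\rank\Mat{A}_d(\vct{\lambda})=n\). Nothing further is needed in this range. The remaining work is the subcritical range \(0\le d<n-1\), where the claim is \(\rank\Mat{A}_d(\vct{\lambda})=d+1\). This is Colombo's property~VII. I would nonetheless give a short self-contained argument, so that the corollary does not rest on the paraphrased 1928 statement.

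For \(0\le d\le n-1\), expand by the binomial theorem:
\[
 (\lambda_r-\lambda_s)^d=\sum_{k=0}^{d}\binom{d}{k}\lambda_r^{k}(-\lambda_s)^{d-k}.
\]
This identity is also valid for \(d=0\) under the convention \(0^0=1\), since both sides then equal \(1\). It gives the factorization
\[
 \Mat{A}_d(\vct{\lambda})=\Mat{V}\,\Mat{D}\,\Mat{W}^{\mathsf T},
\]
where the three factors are as follows.
\begin{itemize}
\item \(\Mat{V}=[\lambda_r^{k}]\) is \(n\times(d+1)\), with rows indexed by \(r\) and columns by \(k=0,\ldots,d\).
\item \(\Mat{D}=\operatorname{diag}\bigl(\binom{d}{k}\bigr)_{k=0}^{d}\).
\item \(\Mat{W}=[(-\lambda_s)^{d-k}]\) is \(n\times(d+1)\), with rows indexed by \(s\) and columns by \(k=0,\ldots,d\).
\end{itemize}
The upper bound \(\rank\Mat{A}_d\le d+1\) is immediate from this factorization.

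For the lower bound, note that \(d+1\le n\) and the nodes are pairwise distinct. Hence the first \(d+1\) rows of \(\Mat{V}\) form an invertible Vandermonde block, so \(\Mat{V}\) has full column rank \(d+1\). The same holds for \(\Mat{W}\): up to reversing the column order, it is the Vandermonde matrix in the distinct nodes \(-\lambda_s\). The diagonal matrix \(\Mat{D}\) is invertible because every binomial coefficient \(\binom{d}{k}\) is positive.

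A product of a full-column-rank matrix, an invertible matrix, and a full-row-rank matrix has rank equal to the common inner dimension. Explicitly, \(\Mat{V}\) has a left inverse \(\Mat{V}^{+}\) and \(\Mat{W}^{\mathsf T}\) has a right inverse \((\Mat{W}^{\mathsf T})^{+}\), so
\[
 \Mat{V}^{+}\,\Mat{A}_d\,(\Mat{W}^{\mathsf T})^{+}=\Mat{D},
\]
and therefore \(\rank\Mat{A}_d\ge\rank\Mat{D}=d+1\). This proves the formula for all \(d<n-1\).

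The corollary has no real obstacle: all the difficulty sits in Theorem~\ref{thm:main}, which is assumed here. The only points needing care are the \(d=0\) convention, which the expansion handles as noted, and the requirement \(d+1\le n\), which is what makes the Vandermonde factors have full column rank.
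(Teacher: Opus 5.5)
Your proof is correct. For \(d\ge n-1\) you do exactly what the paper does: Theorem~\ref{thm:main} gives \(\det\Mat{A}_d\ne0\), hence rank \(n\).

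The difference is in the subcritical range \(0\le d<n-1\). The paper simply cites Colombo's property~VII. You prove it instead, via the factorization \(\Mat{A}_d=\Mat{V}\Mat{D}\Mat{W}^{\mathsf T}\) with inner dimension \(d+1\) and full-rank Vandermonde factors.

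What your route buys:
\begin{itemize}
\item It makes the corollary independent of the paraphrased 1928 statement.
\item It shows that the evenness of \(n\) plays no role below the threshold.
\item It gives slightly more than you use. At \(d=n-1\), \(\Mat{V}\) and \(\Mat{W}\) are square invertible Vandermonde matrices, so \(\det\Mat{A}_{n-1}=\det\Mat{V}\,\det\Mat{D}\,\det\Mat{W}\ne0\). This recovers Colombo's property~VI, the threshold case that the paper's proof of Theorem~\ref{thm:main} also takes from Colombo.
\end{itemize}

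Your factorization is the matrix form of the paper's own identity \eqref{eq:pure-power-pairing}. Indeed, \(\Mat{A}_d\) is the Gram matrix of the nondegenerate pairing \(\langle\cdot,\cdot\rangle_d\) evaluated on the pure powers \(L_{\lambda_s}^d\), and these span \(\cP_d\) once \(n\ge d+1\). So it fits the apolar framework already in Section~\ref{sec:apolar}.

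The paper's citation buys only brevity.
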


\subsection{Outline of the proof}

Colombo's property VI supplies the threshold case \(d=n-1\), and his
property VII supplies the rank formula below the threshold
\cite[pp.~35--36]{Colombo1928}.  Section~\ref{sec:apolar} collects the
classical binary-form tools used in the remaining odd branch.  In
Section~\ref{sec:odd}, singularity gives a nonzero form, where \(X\) and
\(Y\) are commuting indeterminates,
\(F=\sum_jc_j(X+\lambda_jY)^d\) with at least \(n+1\) real projective
linear factors, while the Sylvester--Reznick inequality gives at most
\(n\); this contradiction proves every supercritical odd exponent.
Section~\ref{sec:completion} invokes Dyn--Goodman--Micchelli and Auer for
the even exponents, completes Theorem~\ref{thm:main} and
Corollary~\ref{cor:rank-formula}, and records the determinant sign as a
further result.  Appendix~\ref{app:lean-map} records the correspondence
between the odd-exponent derivation and its Lean formalization.

\section{Binary forms, apolarity, and real Waring rank}
\label{sec:apolar}

Matrices are denoted by bold uppercase letters, vectors by bold lowercase
or Greek letters, and scalar coordinates by ordinary italic letters.
As above, \(X\) and \(Y\) are commuting indeterminates.  Let
\begin{equation}\label{eq:binary-form-space}
 \cP_d=\RR[X,Y]_d
 =\left\{\sum_{a=0}^d f_aX^{d-a}Y^a:f_a\in\RR\right\}.
\end{equation}
The usual monomial basis of \(\cP_d\) is
\[
 X^d,X^{d-1}Y,\ldots,Y^d.
\]
For the apolar and Vandermonde calculations, we instead use the
binomially normalized monomial basis
\begin{equation}\label{eq:normalized-basis}
 \binom d0X^d,\binom d1X^{d-1}Y,\ldots,\binom ddY^d.
\end{equation}
Thus every \(F\in\cP_d\) has unique normalized coordinates
\[
 F(X,Y)=\sum_{a=0}^d\binom da f_aX^{d-a}Y^a,
\]
and \((f_0,\ldots,f_d)\) is its coordinate vector in
\eqref{eq:normalized-basis}.  For example,
\(X^2-Y^2\in\cP_2\), whereas \(X^2+Y\) is not homogeneous and belongs
to no \(\cP_d\).

A nonzero real linear form is \(L=\alpha X+\beta Y\) with
\((\alpha,\beta)\ne(0,0)\).  Two such forms determine the same real
projective factor if they differ by a nonzero scalar.  If
\(F=\prod_jL_j^{m_j}G\), where the \(L_j\) are pairwise
nonproportional and \(G\) has no real linear factor, define
\begin{equation}\label{eq:tau-definition}
 \tau(F)=\sum_jm_j.
\end{equation}
Thus \(\tau(F)\) counts real projective linear factors with
multiplicity.  The real Waring length of nonzero \(F\in\cP_d\) is
\begin{equation}\label{eq:real-waring-length}
 \LR(F)=\min\left\{
 r:F=\sum_{j=1}^r c_j(\alpha_jX+\beta_jY)^d,
 \ c_j\in\RR\setminus\{0\},\ (\alpha_j,\beta_j)\ne(0,0)
 \right\}.
\end{equation}
These conventions agree with Reznick \cite[\S1]{Reznick2013} and
Tokcan \cite[Introduction]{Tokcan2017}.

\begin{example}[Factor count and Waring length]\label{ex:tau-length}
For \(F=X^2-Y^2=(X-Y)(X+Y)\), one has \(\tau(F)=2\) and
\(\LR(F)=2\): the displayed difference of squares is a length-two
representation, while \(F\) is not the square of one real linear form.
The two quantities need not agree.  For \(G=X^2+Y^2\),
\(\tau(G)=0\) because \(G\) has no real projective zero, whereas
\(\LR(G)=2\) because \(G=X^2+Y^2\) and \(G\) is not a real square.
\end{example}

If \(p\in\RR[x]\) has degree at most \(d\), its degree-\(d\)
homogenization is
\begin{equation}\label{eq:homogenization}
 p^{[d]}(X,Y)=Y^dp(X/Y)\in\cP_d,
\end{equation}
expanded so that no division by \(Y\) remains.  We write
\(\tau(p)=\tau(p^{[d]})\) when \(d\) is fixed by context.  If
\(\deg p<d\), the homogenization has \(d-\deg p\) factors \(Y\), which
represent the projective root at infinity.
For example, if \(p(x)=x^2-1\) and \(d=3\), then
\[
 p^{[3]}(X,Y)=X^2Y-Y^3=Y(X-Y)(X+Y),
 \qquad \tau(p)=3.
\]
Here the factor \(Y\) is precisely the projective root at infinity.
The same example also has real Waring length three, since
\[
 p^{[3]}(X,Y)=\frac16(X+Y)^3-\frac16(X-Y)^3-\frac43Y^3.
\]
The displayed representation gives
\(\LR\bigl(p^{[3]}\bigr)\le3\), while
Theorem~\ref{thm:reznick} gives
\(3=\tau\bigl(p^{[3]}\bigr)\le\LR\bigl(p^{[3]}\bigr)\).  Hence
\(\LR\bigl(p^{[3]}\bigr)=3\).

\subsection{The apolar pairing and evaluation}

For
\[
 P=\sum_{a=0}^d\binom da p_aX^{d-a}Y^a,
 \qquad
 Q=\sum_{a=0}^d\binom da q_aX^{d-a}Y^a,
\]
define the normalized top-transvectant (apolar) pairing by
\begin{equation}\label{eq:apolar-pairing}
 \langle P,Q\rangle_d
 =\sum_{a=0}^d(-1)^a\binom da p_aq_{d-a}.
\end{equation}
This is the top transvectant in normalized coordinates; see Olver
\cite[Chapter~5]{Olver1999} and Ehrenborg--Rota
\cite{EhrenborgRota1993}.  Dolgachev--Kanev
\cite[Lemma~1.6]{DolgachevKanev1993} give the general pure-power
evaluation property.  Swapping the arguments in
\eqref{eq:apolar-pairing} shows that the pairing is symmetric for even
\(d\) and alternating for odd \(d\).

For \(t\in\RR\), define
\[
 L_t(X,Y)\defeq X+tY\in\cP_1,
 \qquad
 L_t^d(X,Y)\defeq(X+tY)^d\in\cP_d.
\]
In the normalized basis \eqref{eq:normalized-basis}, the coordinate
vector of \(L_t^d\) is \((1,t,\ldots,t^d)\).

\begin{lemma}[Pure powers and evaluation]\label{lem:apolar-identities}
For all \(u,v,t\in\RR\) and all \(F\in\cP_d\),
\begin{align}
 \langle L_u^d,L_v^d\rangle_d
 &= (v-u)^d, \label{eq:pure-power-pairing}\\
 \langle L_t^d,F\rangle_d
 &= F(-t,1). \label{eq:evaluation}
\end{align}
\end{lemma}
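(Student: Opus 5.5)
The plan is to prove the evaluation identity \eqref{eq:evaluation} first, directly from the definition \eqref{eq:apolar-pairing}, and then to obtain \eqref{eq:pure-power-pairing} as a special case.

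Let \(F\) have normalized coordinates \((f_0,\ldots,f_d)\). The coordinate vector of \(L_t^d\) is \((1,t,\ldots,t^d)\), so \(p_a=t^a\) and \(q_{d-a}=f_{d-a}\). Substituting into \eqref{eq:apolar-pairing} gives
\[
 \langle L_t^d,F\rangle_d=\sum_{a=0}^d(-1)^a\binom da t^a f_{d-a}.
\]
Next evaluate \(F(-t,1)=\sum_{b=0}^d\binom db f_b(-t)^{d-b}\), and reindex with \(a=d-b\), using \(\binom d{d-a}=\binom da\). This yields \(\sum_{a}\binom da(-1)^a t^a f_{d-a}\), which is exactly the sum above. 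That proves \eqref{eq:evaluation}.

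For \eqref{eq:pure-power-pairing}, apply \eqref{eq:evaluation} with \(t=u\) and \(F=L_v^d\). This gives \(\langle L_u^d,L_v^d\rangle_d=L_v^d(-u,1)=(-u+v)^d=(v-u)^d\).

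The only point needing care is the index bookkeeping: the pairing uses \(q_{d-a}\), not \(q_a\), and the sign \((-1)^a\) must match the power of \(-t\) after reindexing. The binomial symmetry \(\binom da=\binom d{d-a}\) is what makes the two sums agree term by term. No earlier results are needed beyond the stated coordinate vector of \(L_t^d\), which itself follows from the binomial expansion \((X+tY)^d=\sum_a\binom da t^aX^{d-a}Y^a\).
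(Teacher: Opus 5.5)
Your proof is correct and essentially matches the paper's: the evaluation identity is obtained by the same reindexing \(a=d-b\) with \(\binom d{d-a}=\binom da\). The only difference is that you get \(\langle L_u^d,L_v^d\rangle_d=(v-u)^d\) by specializing to \(F=L_v^d\), whereas the paper checks it directly with the binomial theorem; these are the same one-line calculation.
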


\begin{proof}
The normalized coefficient vector of \(L_u^d\) is
\((1,u,\ldots,u^d)\).  Hence the binomial theorem gives
\[
 \langle L_u^d,L_v^d\rangle_d
 =\sum_{a=0}^d(-1)^a\binom da u^av^{d-a}=(v-u)^d.
\]
If \(F=\sum_b\binom db f_bX^{d-b}Y^b\), then
\[
 \langle L_t^d,F\rangle_d
 =\sum_{a=0}^d(-1)^a\binom da t^af_{d-a}
 =\sum_{b=0}^d\binom db f_b(-t)^{d-b}=F(-t,1).
\]
\end{proof}

\subsection{Vandermonde independence of pure powers}

\begin{lemma}[Classical Vandermonde independence]\label{lem:pure-power-independence}
If \(m\le d+1\) and \(t_1,\ldots,t_m\) are pairwise distinct real
numbers, then \(L_{t_1}^d,\ldots,L_{t_m}^d\) are linearly independent in
\(\cP_d\).
\end{lemma}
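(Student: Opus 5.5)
The plan is to pass to normalized coordinates and reduce the statement to the nonsingularity of a Vandermonde matrix. As recorded just before Lemma~\ref{lem:apolar-identities}, the normalized coordinate vector of \(L_t^d\) in the basis \eqref{eq:normalized-basis} is \((1,t,t^2,\ldots,t^d)\). The coordinate map \(\cP_d\to\RR^{d+1}\) is a linear isomorphism. It therefore suffices to show that the vectors \((1,t_j,\ldots,t_j^d)\), for \(j=1,\ldots,m\), are linearly independent in \(\RR^{d+1}\).

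Suppose \(\sum_{j=1}^m c_jL_{t_j}^d=0\) with \(c_j\in\RR\). Reading off the normalized coordinates \(a=0,1,\ldots,m-1\) gives the square system
\[
 \sum_{j=1}^m c_j t_j^{a}=0,\qquad 0\le a\le m-1 .
\]
These coordinates are available because \(m-1\le d\). The coefficient matrix is the \(m\times m\) Vandermonde matrix \(\bigl[t_j^{a}\bigr]\). Its determinant is \(\prod_{1\le i<j\le m}(t_j-t_i)\), which is nonzero because the \(t_j\) are pairwise distinct. Hence all \(c_j=0\), which is the claim.

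Alternatively, one can avoid quoting the Vandermonde determinant and use evaluation instead. By Lagrange interpolation, for each \(k\) choose a polynomial \(p_k\) of degree \(m-1\le d\) with \(p_k(-t_j)=\delta_{jk}\). Let \(F_k\) be its homogenization in \(\cP_d\), as in \eqref{eq:homogenization}. Pairing the relation with \(F_k\) and applying \eqref{eq:evaluation} gives
\[
 0=\sum_j c_j\langle L_{t_j}^d,F_k\rangle_d=\sum_j c_jF_k(-t_j,1)=c_k .
\]
Either route works. The only point that needs care is that the hypothesis \(m\le d+1\) is exactly what allows the interpolating polynomial, or the truncated Vandermonde block, to fit inside degree \(d\). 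There is no real obstacle beyond this.
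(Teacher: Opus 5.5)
Your main argument is correct and is essentially the paper's proof: the first \(m\) normalized coordinates of \(L_{t_1}^d,\ldots,L_{t_m}^d\) form an \(m\times m\) Vandermonde matrix, which is nonsingular because the \(t_j\) are distinct, and the hypothesis \(m\le d+1\) is what guarantees those \(m\) coordinates exist. Your alternative Lagrange-interpolation argument via \eqref{eq:evaluation} is also valid and not circular, since the evaluation identity is proved beforehand without this lemma, but it is not needed.
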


\begin{proof}
In the normalized monomial basis, the first \(m\) coefficient coordinates
form the Vandermonde matrix.  Its classical determinant formula
\cite{MaconSpitzbart1958} is
\[
 \det\bigl[t_j^a\bigr]_{
 \substack{0\le a\le m-1\\1\le j\le m}}
 =\prod_{1\le i<j\le m}(t_j-t_i)\ne0,
\]
where the final inequality follows from pairwise distinctness.
\end{proof}

\subsection{Real factors and real Waring rank}

The external root-count result used below is the following homogeneous
form of Sylvester's rule of signs.

\begin{theorem}[Sylvester--Reznick]\label{thm:reznick}
Let \(F\in\RR[X,Y]_d\) be nonzero and not a \(d\)-th power of a real
linear form.  Then
\[
 \tau(F)\le \LR(F).
\]
\end{theorem}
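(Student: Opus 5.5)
The plan is induction on the real Waring length \(r=\LR(F)\), differentiating along a direction that annihilates one summand. This lowers the length by one and, by a homogeneous Rolle argument, lowers the factor count \(\tau\) by at most one.

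\textbf{Setup.} Suppose \(F=\sum_{j=1}^r c_j\ell_j^d\) with \(r=\LR(F)\), \(c_j\ne0\), and \(\ell_j=\alpha_jX+\beta_jY\). By minimality the \(\ell_j\) are pairwise nonproportional, since proportional summands could be merged. If \(r\ge d\), then \(\tau(F)\le d\le r\) trivially. So assume \(r<d\), and note that \(r\ge2\) because \(F\) is not a \(d\)-th power. Apply a real invertible linear change of variables sending \(\ell_r\) to \(X\). This preserves \(\tau\), \(\LR\) and the hypotheses, so we may take \(\ell_r=X\). Set \(D=\partial/\partial Y\). Then \(DX^d=0\), so
\[
 DF=d\sum_{j<r}c_j\beta_j\ell_j^{d-1}\in\cP_{d-1},
\]
and hence \(\LR(DF)\le r-1\).

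\textbf{Homogeneous Rolle step.} The key estimate is \(\tau(DF)\ge\tau(F)-1\). Dehomogenize by \(f(y)=F(1,y)\). The real projective factors of \(F\) are the real roots of \(f\), with multiplicity, together with the factor \(X\) at infinity of multiplicity \(d-\deg f\). Now \(DF(1,y)=f'(y)\). Classical Rolle, with multiplicities, gives \(f'\) at least (number of real roots of \(f\)) \(-1\) real roots. Regarded in degree \(d-1\), the multiplicity of \(X\) in \(DF\) is \((d-1)-(\deg f-1)=d-\deg f\), which is unchanged. This requires \(\deg f\ge1\). If instead \(f\) is constant, then \(F=cX^d\), which is excluded; the same remark shows \(DF\ne0\).

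\textbf{Induction and the obstacle.} If \(DF\) is not a \((d-1)\)-th power, the induction hypothesis in degree \(d-1\) gives
\[
 \tau(F)-1\le\tau(DF)\le\LR(DF)\le r-1,
\]
which is the claim. The main obstacle is the degenerate case \(DF=c\,m^{d-1}\), where induction is unavailable. I would dispose of it with Lemma~\ref{lem:pure-power-independence}. After a further change of coordinates, the forms \(m^{d-1}\) and \(\ell_1^{d-1},\dots,\ell_{r-1}^{d-1}\) are pure powers. There are at most \(r\le d\) of them, and the relation \(c\,m^{d-1}=d\sum_{j<r}c_j\beta_j\ell_j^{d-1}\) is linear among them. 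If \(m\) is proportional to no \(\ell_j\), these are at most \(d\) distinct pure powers and the relation contradicts independence. So \(m\) must be proportional to some \(\ell_j\), and independence then forces \(\beta_i=0\) for every other \(i<r\). Since each such \(\ell_i\) is not proportional to \(X\), we have \(\beta_i\ne0\), so \(r=2\). In that case \(F=c_1\ell_1^d+c_2X^d\) with \(\ell_1,X\) independent. A change of coordinates gives \(F=aX^d+bY^d\), so \(\tau(F)\le2=r\) by inspecting \(x^d=-b/a\). The base case \(r=1\) is exactly the excluded case of a pure power. The remaining bookkeeping to check is that this degenerate branch exhausts all cases where the inductive hypothesis fails.
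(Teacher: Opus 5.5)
Your argument is correct, and it takes a genuinely different route from the paper. The paper gives no proof of its own. It cites Reznick's Theorem~3.2, the sharper inequality $\tau(F)\le\sigma\le r$ with $\sigma$ a cyclic sign-variation count, valid for any real representation by $r\ge2$ distinct directions, and specializes it to a minimal representation.

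You instead prove only the weaker bound $\tau(F)\le\LR(F)$, which is all Section~\ref{sec:odd} needs. The proof is a strong induction on the length, taken over all degrees simultaneously:
\begin{enumerate}
\item Normalizing $\ell_r=X$ and applying $\partial/\partial Y$ kills one summand.
\item The dehomogenized Rolle count shows that $\tau$ drops by at most one, because the multiplicity $d-\deg f$ of the factor $X$ is unchanged.
\item The only obstruction to the induction is $DF=c\,m^{d-1}$. Lemma~\ref{lem:pure-power-independence}, available because $r<d$, forces this into the case $r=2$, where $aX^d+bY^d$ visibly has at most two real factors.
\end{enumerate}

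Your route buys self-containment. It uses nothing beyond Rolle's theorem and the Vandermonde lemma already in the paper, and it avoids the sign bookkeeping that $\sigma$ requires. The citation buys brevity and the stronger sign-variation bound, which is not used here.

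Your proof also makes explicit the correct reading of the hypothesis. ``Not a $d$-th power of a real linear form'' must mean ``not of the form $c\,\ell^d$'', that is, $\LR(F)\ge2$. Taken literally, the statement fails for even $d$: $-X^d$ is not a $d$-th power of a real linear form, yet $\tau(-X^d)=d>1=\LR(-X^d)$. Your uses of ``pure power'' (to get $r\ge2$, and in the degenerate case $DF=c\,m^{d-1}$) are consistent with the correct reading. The paper's application is unaffected, since there $F$ has two nonproportional linear factors.
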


\begin{proof}[Source and exact hypothesis]
Reznick \cite[Theorem~3.2]{Reznick2013} proves the sharper inequality
\(\tau(F)\le\sigma\le r\) for an honest real representation by
\(r\ge2\) distinct projective directions, where \(\sigma\) is a cyclic
sign-variation count.  Applying it to a minimal real Waring
representation gives the stated form.  Tokcan
\cite[Introduction, pp.~1--2]{Tokcan2017} records the same root-counting
consequence.
\end{proof}

\begin{remark}[Multiplicity matters]\label{rem:multiplicity}
The quantity \(\tau(F)\) counts factors with multiplicity.  Thus an
additional factor increases \(\tau(F)\) even if it coincides with one of
the already prescribed factors.  For example,
\((X-Y)^2(X+Y)\) has only two distinct real projective directions but
has \(\tau=3\), because \(X-Y\) occurs twice.
\end{remark}

\section{Odd exponents: the apolar--Waring contradiction}
\label{sec:odd}

Throughout this section, \(n\) is even, \(d>n-1\) is odd, and the nodes
are pairwise distinct.

\begin{proposition}\label{prop:odd-nonsingular}
The matrix \(\Mat{A}_d(\vct{\lambda})\) is nonsingular.
\end{proposition}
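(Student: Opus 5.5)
Suppose, for contradiction, that \(\Mat{A}_d(\vct{\lambda})\) is singular, and fix a nonzero real vector \(\vct{c}=(c_1,\ldots,c_n)\) with \(\Mat{A}_d(\vct{\lambda})\vct{c}=\vct{0}\), that is,
\[
 \sum_{s=1}^n c_s(\lambda_r-\lambda_s)^d=0,\qquad 1\le r\le n.
\]
I would attach to this kernel vector the binary form
\[
 F\defeq\sum_{s=1}^n c_sL_{\lambda_s}^d\in\cP_d .
\]
Since \(d\) is odd and \(d>n-1\) with \(n\) even, we have \(d\ge n+1\), so \(n\le d+1\). Lemma~\ref{lem:pure-power-independence} then shows that \(L_{\lambda_1}^d,\ldots,L_{\lambda_n}^d\) are linearly independent. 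Hence \(F\ne0\). The displayed representation uses at most \(n\) nonzero coefficients, so \(\LR(F)\le n\).

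Next I translate the kernel equations into vanishing conditions on \(F\). By \eqref{eq:pure-power-pairing},
\[
 \langle L_{\lambda_s}^d,L_{\lambda_r}^d\rangle_d=(\lambda_r-\lambda_s)^d ,
\]
so by bilinearity the \(r\)-th kernel equation reads \(\langle F,L_{\lambda_r}^d\rangle_d=0\). The pairing is alternating for odd \(d\), so \(\langle L_{\lambda_r}^d,F\rangle_d=0\) as well. By \eqref{eq:evaluation} this says \(F(-\lambda_r,1)=0\) for every \(r\). Thus each linear form \(X+\lambda_rY\) divides \(F\). These \(n\) forms are pairwise nonproportional because the nodes are distinct, so
\[
 F=\prod_{r=1}^n(X+\lambda_rY)\,G,\qquad G\in\cP_{d-n},\ G\ne0 .
\]

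The key extra step, which I expect to be the main point to get right, is producing one more real projective factor. Here the parity hypotheses enter. Since \(d\) is odd and \(n\) is even, \(d-n\) is odd, and in particular \(d-n\ge1\). A nonzero real binary form of odd degree always has a real projective linear factor. To see this, if the coefficient of \(X^{d-n}\) in \(G\) vanishes, then \(Y\) divides \(G\). Otherwise \(G(x,1)\) is a real polynomial of odd degree and has a real root \(x_0\), so \(X-x_0Y\) divides \(G\). Either way \(\tau(G)\ge1\), and therefore
\[
 \tau(F)\ge n+1 .
\]
By Remark~\ref{rem:multiplicity}, this count is valid even if the extra factor coincides with one of the \(X+\lambda_rY\).

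Finally, I check the hypothesis of Theorem~\ref{thm:reznick}. The form \(F\) is nonzero and has at least \(n\ge2\) pairwise nonproportional real linear factors. A \(d\)-th power of a real linear form has only one projective root, so \(F\) is not such a power. Theorem~\ref{thm:reznick} then gives
\[
 n+1\le\tau(F)\le\LR(F)\le n ,
\]
which is a contradiction. Hence the kernel of \(\Mat{A}_d(\vct{\lambda})\) is trivial, and \(\Mat{A}_d(\vct{\lambda})\) is nonsingular.
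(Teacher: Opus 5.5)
Your proposal is correct and follows the paper's proof essentially step for step. The chain is the same: kernel vector \(\vct{c}\), then the form \(F=\sum_j c_jL_{\lambda_j}^d\), which is nonzero by Vandermonde independence; vanishing at the \(n\) nodes gives the factor \(\prod_r(X+\lambda_rY)\), the odd-degree cofactor supplies one extra real factor, and Theorem~\ref{thm:reznick} yields the contradiction \(n+1\le\tau(F)\le\LR(F)\le n\). The differences are cosmetic. You compute \(\langle F,L_{\lambda_r}^d\rangle_d\) and use antisymmetry of the pairing, where the paper evaluates \(\langle L_{\lambda_r}^d,F\rangle_d=-(\Mat{A}_d\vct{c})_r\) directly. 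You also find the real root of the cofactor by dehomogenizing, where the paper uses the intermediate value theorem on the unit circle.
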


\begin{proof}
Suppose to the contrary that \(\Mat{A}_d(\vct{\lambda})\) is singular,
and choose a nonzero kernel vector
\(\vct{c}=(c_1,\ldots,c_n)^{\mathsf T}\).  Define
\begin{equation}\label{eq:F-kernel}
 F=\sum_{j=1}^n c_jL_{\lambda_j}^d\in\cP_d.
\end{equation}
Since \(n\le d+1\), Lemma~\ref{lem:pure-power-independence} gives
\(F\ne0\).  For every \(r\), Lemma~\ref{lem:apolar-identities} and the
kernel equation give
\[
 F(-\lambda_r,1)
 =\langle L_{\lambda_r}^d,F\rangle_d
 =\sum_{j=1}^n c_j(\lambda_j-\lambda_r)^d
 =-\bigl(\Mat{A}_d\vct{c}\bigr)_r=0.
\]
Consequently,
\[
 X+\lambda_rY\mid F
 \qquad(1\le r\le n).
\]
Indeed, after the linear change \(U=X+\lambda_rY\), \(V=Y\), the value
at \(U=0\) is \(V^dF(-\lambda_r,1)=0\), so every term contains \(U\).

The factors are pairwise nonproportional.  Hence
\begin{equation}\label{eq:Q-divides-F}
 Q(X,Y)\defeq\prod_{r=1}^n(X+\lambda_rY)
 \quad\text{divides}\quad F(X,Y).
\end{equation}
There is therefore a homogeneous \(H\in\RR[X,Y]_{d-n}\) such that
\(F=QH\).  Since \(d>n-1\) and \(d,n\) have opposite parity,
\(d-n\) is a positive odd integer.  Every nonzero real binary form of
odd degree has a real projective zero: on the unit circle,
\(H(-\vct v)=-H(\vct v)\), so the intermediate value theorem along a
semicircle gives a zero.  Thus \(H\) has a real linear factor.
For the concrete example \(H=X^3+Y^3\), the projective zero
\([1:-1]\) corresponds to the factor \(X+Y\).

The factor \(Q\) supplies \(n\) pairwise distinct real linear factors,
and \(H\) supplies one more occurrence, even if it repeats one of them.
Remark~\ref{rem:multiplicity} gives
\begin{equation}\label{eq:tau-lower}
 \tau(F)\ge n+1.
\end{equation}
Because \(Q\) contains at least two nonproportional factors, \(F\) is
not a pure \(d\)-th power.  After zero coefficients are omitted from
\eqref{eq:F-kernel}, the same representation gives
\(\LR(F)\le |\operatorname{supp}(\vct c)|\le n\), where
\[
 \operatorname{supp}(\vct c)=\{j:c_j\ne0\}.
\]
Theorem~\ref{thm:reznick} now yields
\[
 n+1\le\tau(F)\le\LR(F)\le n,
\]
a contradiction.
\end{proof}

\section{Completion and further consequences}
\label{sec:completion}

\begin{proof}[Proof of Theorem~\ref{thm:main}]
If \(d=n-1\), Colombo's property VI gives nonsingularity
\cite[pp.~35--36]{Colombo1928}.  If \(d>n-1\) is odd, apply
Proposition~\ref{prop:odd-nonsingular}.  If \(d\) is even, then
\(\Mat{A}_d=[|\lambda_r-\lambda_s|^d]\) is a one-dimensional
distance-power matrix.  The results of Dyn--Goodman--Micchelli
\cite{DynGoodmanMicchelli1986} and Auer \cite{Auer1998} show that it is
nonsingular in this range.  These cases exhaust all integers
\(d\ge n-1\).
\end{proof}

\begin{proof}[Proof of Corollary~\ref{cor:rank-formula}]
For \(0\le d<n-1\), Colombo's property VII gives
\(\rank\Mat{A}_d=d+1\) \cite[pp.~35--36]{Colombo1928}.  For
\(d\ge n-1\), Theorem~\ref{thm:main} gives
\(\rank\Mat{A}_d=n\).
\end{proof}

\begin{proposition}[Sign of the determinant]\label{prop:det-sign}
Let \(n\ge2\) be even, let \(\vct{\lambda}\in\RR^n\) have pairwise
distinct coordinates, and let \(d\ge n-1\).  Then
\begin{equation}\label{eq:det-sign}
 \det \Mat{A}_d(\vct{\lambda})>0\quad(d\ \text{odd}),
 \qquad
 \operatorname{sgn}\det \Mat{A}_d(\vct{\lambda})=(-1)^{n/2}
 \quad(d\ \text{even}).
\end{equation}
\end{proposition}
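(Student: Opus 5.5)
The proof splits by the parity of \(d\). In each case I combine nonvanishing (Theorem~\ref{thm:main}) with a connectedness or continuity argument, or with a known inertia result.

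\emph{Odd \(d\).} By Colombo's property~II, \(\Mat{A}_d\) is real skew-symmetric of even order \(n\). Hence \(\det\Mat{A}_d=\Pf(\Mat{A}_d)^2\ge0\). Theorem~\ref{thm:main} says the determinant is nonzero, so it is strictly positive. No further work is needed here. Alternatively, without the Pfaffian, the nonzero eigenvalues of a real skew-symmetric matrix occur in conjugate pairs \(\pm i\mu\), and each pair contributes \(\mu^2>0\) to the determinant.

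\emph{Even \(d\ge n-1\).} Here \(\Mat{A}_d\) is real symmetric. Its determinant sign is \((-1)^{\nu}\), where \(\nu\) is the number of negative eigenvalues. So it suffices to show that \(\Mat{A}_d\) has exactly \(n/2\) negative eigenvalues and \(n/2\) positive ones. I would get this inertia in one of two ways.

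The first is to cite it directly. Dyn--Goodman--Micchelli and Auer (and equivalently Bhatia--Jain \cite[Theorem~1(iv)]{BhatiaJain2024}) compute the inertia of \([|\lambda_r-\lambda_s|^{d}]\). For an even exponent \(d=2k\) with \(d\ge n-1\), the result is \(n/2\) positive and \(n/2\) negative eigenvalues.

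The second is a self-contained derivation from the paper's own tools. By Lemma~\ref{lem:apolar-identities}, \(\Mat{A}_d\) is the Gram matrix of the vectors \(L_{\lambda_j}^d\) under \(\langle\cdot,\cdot\rangle_d\), up to the sign \((-1)^d=1\). For even \(d\), \(\langle\cdot,\cdot\rangle_d\) is a symmetric bilinear form on \(\cP_d\). Its inertia can be read off from the basis pairing \(X^{d-a}Y^a\leftrightarrow X^aY^{d-a}\). It consists of \(d/2\) hyperbolic planes plus the middle coordinate with sign \((-1)^{d/2}\).

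The \(L_{\lambda_j}^d\) are independent by Lemma~\ref{lem:pure-power-independence}, so \(\Mat{A}_d\) is the restriction of this form to an \(n\)-dimensional subspace \(W\). That restriction is nondegenerate by Theorem~\ref{thm:main}. Interlacing then bounds the number of positive and negative eigenvalues by \(d/2+1\) each. That bound alone does not pin the split at \(n/2\), so I would fall back on continuity.

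The continuity argument runs as follows. The set of configurations of \(n\) ordered distinct real nodes is connected after sorting, and the determinant is invariant under simultaneous permutation of the nodes. Along any path in this set the determinant never vanishes, by Theorem~\ref{thm:main}, so its sign is constant. It therefore suffices to evaluate the sign at one convenient configuration.

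For the evaluation I would take a degenerating configuration. Scale so the nodes are \(\lambda_j=\varepsilon j\). Then \(\Mat{A}_d=\varepsilon^{d}[(r-s)^d]\), and since \(\varepsilon^{dn}>0\) the scaling does not change the sign. So the configuration alone must carry the information, and the real task is computing \(\operatorname{sgn}\det[(r-s)^d]_{r,s=1}^n\) for one convenient node set, for all even \(d\ge n\).

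\textbf{Main obstacle.} This single evaluation is where I expect the difficulty. I would first try to cite the inertia statement directly, which is the cleanest route. Failing that, I would make a second continuity argument, this time in \(d\) via the real exponent \(|\lambda_r-\lambda_s|^{p}\). This is justified because Auer's classification shows nonsingularity for all non-even-integer-degenerate \(p\) beyond the threshold. I would connect \(p=d\) to \(p\) slightly above \(0\)... but nonsingularity fails at some integers below \(n-1\). So instead I would rely on the inertia theorem of Bhatia--Jain, which states that \([|\lambda_r-\lambda_s|^p]\) has exactly \(\lceil n/2\rceil\)-type alternating inertia with \(n/2\) negative eigenvalues for even \(p=2k\ge n\). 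That gives \(\operatorname{sgn}\det=(-1)^{n/2}\).
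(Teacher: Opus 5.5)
Your proposal matches the paper's proof. For odd \(d\) you use \(\det\Mat{A}_d=\Pf(\Mat{A}_d)^2\) together with nonsingularity from Theorem~\ref{thm:main}, and for even \(d\) you read off the sign \((-1)^{n/2}\) from the distance-power inertia \((n/2,n/2)\) cited from Dyn--Goodman--Micchelli and Auer, which is exactly what the paper does; your self-contained detour (apolar Gram matrix, interlacing bound, continuity in the nodes) stays unfinished, as you acknowledge, but it is not needed because the cited inertia result settles the even case in both your argument and the paper's.
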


\begin{proof}
For a real skew-symmetric matrix \(\Mat M\) of even order, its Pfaffian
\(\Pf(\Mat M)\) satisfies the standard identity
\[
 \det\Mat M=\Pf(\Mat M)^2;
\]
see Ishikawa--Wakayama \cite[\S2]{IshikawaWakayama1995}.  If \(d\) is
odd, \(\Mat{A}_d\) is skew-symmetric and nonsingular by
Theorem~\ref{thm:main}, so its determinant is strictly positive.  If
\(d\) is even, Dyn--Goodman--Micchelli
\cite{DynGoodmanMicchelli1986} and Auer \cite{Auer1998} give the
distance-power inertia and hence the second sign in
\eqref{eq:det-sign}.
\end{proof}

\appendix

\section{Correspondence with the Lean formalization}
\label{app:lean-map}

\paragraph{Formal verification.}
The odd-exponent derivation in Section~\ref{sec:odd} has been formalized
and kernel-checked in Lean 4.32.1 with mathlib v4.32.1.  The corresponding
formalization source, including code and detailed documentation, is hosted
at \url{https://github.com/ttieli/Colombo1928}. The relevant
source dependency is
\[
 \begin{gathered}
 \texttt{Basic}\longrightarrow\texttt{Vandermonde}
 \longrightarrow\texttt{Apolar}\longrightarrow\texttt{ProjectiveRoots}\\
 \longrightarrow\texttt{SylvesterReznick}\longrightarrow\texttt{OddBranch}.
 \end{gathered}
\]

\begin{longtable}{>{\raggedright\arraybackslash}p{0.18\textwidth}
                  >{\raggedright\arraybackslash}p{0.20\textwidth}
                  >{\raggedright\arraybackslash}p{0.31\textwidth}
                  >{\raggedright\arraybackslash}p{0.20\textwidth}}
\caption{Correspondence between paper results and Lean proof terms.}
\label{tab:lean-map}\\
\toprule
Paper result & Lean file & Lean theorem & Formal role\\
\midrule
\endfirsthead
\toprule
Paper result & Lean file & Lean theorem & Formal role\\
\midrule
\endhead
\bottomrule
\endfoot
Threshold exponent &
\texttt{OddBranch} &
\path{threshold_nonsingular} &
Independently verifies Colombo's threshold result.\\
Odd root-count bound &
\texttt{OddBranch} &
\path{projectiveRootCount_powerSum_le_support} &
Formalizes the Sylvester--Reznick support bound.\\
All odd exponents &
\texttt{OddBranch} &
\path{odd_nonsingular} &
Matches the argument of Section~\ref{sec:odd}.\\
\end{longtable}

The formalization contains no project-specific axiom and no
\texttt{sorry} or \texttt{admit} placeholder.  Its axiom audit reports
only \texttt{propext}, \texttt{Classical.choice}, and
\texttt{Quot.sound}.  Kernel acceptance establishes closure of the
formal proof term; it does not replace external peer review or determine
literature priority.

\section*{Declaration on the use of generative AI and AI-assisted technologies}

During the research and preparation of this manuscript, the authors used
generative-AI systems, including large language models, under human
direction to assist with mathematical exploration, the development of
alternative proof routes, Lean~4 proof engineering and debugging,
literature search and organization, and bilingual drafting and LaTeX
editing.  These systems are not authors and bear no responsibility for the
work.  The authors determined the final mathematical statements and
exposition, checked the cited sources and the computational and formal
outputs, and take full responsibility for the accuracy, originality, and
integrity of the manuscript.

\section*{Acknowledgements}

The authors thank the authors of the cited works and the Lean/mathlib
community for the mathematical and formal infrastructure used here.

\bibliographystyle{unsrt}
\bibliography{references}

\end{document}